\newcommand{\keywords}[1]{\par\addvspace\baselineskip
\noindent\keywordname\enspace\ignorespaces#1}
\def \XX {X}
\def \DD {D}
\def \GG {G}
\def \II {I}
\def \AA {A}
\def \BB {B}
\def \xx {{\bf x}}
\def \mm {{\bf m}}
\def \SS {S}
\def\half{{\frac{1}{2}}}
\def \YY {Y}
\def \YYt {{\widetilde{Y}}}
\def \nt {{\tilde{n}}}
\def\L1 {$L_1$}
\def\KK{{\cal K }}
\def\Tr{{\mbox{Tr}}}
\def \CC {C}
\def \ZZ {Z}
\def\diag{{\mbox{diag}}}
\begin{document}

\mainmatter  

\title{Kernel Alignment Inspired \\Linear Discriminant Analysis}

\titlerunning{Kernel Alignment Inspired Linear Discriminant Analysis}

\author{Shuai Zheng and Chris Ding}
\authorrunning{S. Zheng and C. Ding}
\institute{Department of Computer Science and Engineering,\\
University of Texas at Arlington, TX, USA\\
\mailsa, \mailsb
}

\toctitle{Kernel Alignment Inspired Linear Discriminant Analysis}
\tocauthor{Shuai Zheng and Chris Ding}
\maketitle
\begin{abstract}
Kernel alignment measures the degree of similarity between two kernels. In
this paper, inspired from kernel alignment, we propose a new Linear
Discriminant Analysis (LDA) formulation, kernel alignment LDA (kaLDA). We
first define two kernels, data kernel and class indicator kernel. The problem
is to find a subspace to maximize the alignment between subspace-transformed
data kernel and class indicator kernel. Surprisingly, the kernel alignment
induced kaLDA objective function is very similar to classical LDA and can be
expressed using between-class and total scatter matrices. This can be
extended to multi-label data. We use a Stiefel-manifold  gradient descent
algorithm to solve this problem. We perform experiments on 8 single-label and
6 multi-label data sets. Results show that kaLDA has very good performance on
many single-label and multi-label problems.
\keywords{Kernel Alignment, LDA}
\end{abstract}

\section{Introduction}

Kernel alignment~\cite{kernel2001} is a way to incorporate class label
information into kernels which are traditionally directly constructed from
data without using class labels. Kernel alignment can be viewed as a
measurement of consistency between the similarity function (the kernel) and
class structure in the data. Improving this consistency helps to enforce data
become more separated when using the class label aligned kernel. Kernel
alignment has been applied to pattern recognition and feature selection
recently~\cite{cristianini2007method,zhu2004nonparametric,hoi2006learning,howard2009transformation,cuturi2011fast}.

In this paper, we find that if we use the widely used linear kernel and a kernel built from class indicators,
the resulting kernel alignment function is very similar to the widely used linear discriminant analysis (LDA),
using the well-known between-class scatter matrix $\SS_b$ and total scatter matrix $\SS_t$.
We call this objective function as kernel alignment induced LDA (kaLDA).
If we transform data into a linear subspace, the optimal solution is to maximize this kaLDA.

We further analyze this kaLDA and propose a Stiefel-manifold gradient descent
algorithm to solve it. We also extend kaLDA to multi-label problems.
Surprisingly, the scatter matrices arising in multi-label kernel alignment
are identical those matrices developed in Multi-label
LDA~\cite{wang2010multi}.

We perform extensive experiments by comparing kaLDA with other approaches on 8 single-label datasets and 6 multi-label data sets.
Results show that kernel alignment LDA approach has good performance in terms of classification accuracy and F1 score.

\section{From Kernel Alignment to LDA}
Kernel Alignment is a similarity measurement between a kernel function and a target function. In other words, kernel alignment evaluates the degree of fitness between the data in kernel space and the target function. For this reason, we usually set the target function to be the class indicator function. The other kernel function is the data matrix. By measuring the similarity between data kernel and class indicator kernel, we can get a sense of how easily this data can be separated in kernel subspace. The alignment of two kernels $\KK_1$ and $\KK_2$ is given as \cite{kernel2001}:
\begin{align}
A(\KK_1, \KK_2)= \frac{\Tr(\KK_1\KK_2)}{\sqrt{\Tr(\KK_1\KK_1)}\sqrt{\Tr(\KK_2\KK_2)}}. \label{eq:al}
\end{align}

We first introduce some notations, and then present Theorem \ref{th:th1} and kernel alignment projective function.

Let data matrix be $\XX \in \Re^{p \times n}$ and $\XX = (\xx_1, \cdots, \xx_n)$, where $p$ is data dimension, $n$ is number of data points, $\xx_i$ is a data point. Let normalized class indicator matrix be $\YY \in \Re^{n \times K}$, which was used to prove the equivalence between PCA and K-means clustering
\cite{Zha:kmeans,ding2004k}, and
\begin{align}
\label{eq:y}
\YY_{ik} =
\begin{cases}
\frac{1}{\sqrt{n_k}}, & \mbox{if point } i\mbox{ is in class } k. \\
0, & \mbox{otherwise}.
\end{cases}
\end{align}%
where $K$ is total class number, $n_k$ is the number of data points in class $k$. Class mean is $\mm_k=\sum_{\xx_i \in k}\xx_i/n_k$ and total mean of data is $\mm =\sum_i\xx_i/n$.

\begin{theorem}
\label{th:th1}
Define data kernel $\KK_1$ and class label kernel $\KK_2$ as follows:
\begin{align}
\KK_1=\XX^T\XX, \;\; \KK_2=\YY\YY^T,
\end{align}%
we have
\begin{align}
A(\KK_1,\KK_2)= c\frac{\Tr\SS_b}{\sqrt{\Tr\SS_t^2}} \label{eq:ak}
\end{align}%
where $c = 1/\sqrt{\Tr(\YY\YY^T)^2}$ is a constant independent of $\XX$.

Furthermore, let $\GG \in \Re^{p \times k}$ be a linear transformation to a $k$-dimensional subspace
\begin{align}
\widetilde \XX = \GG^T \XX, \;\; \widetilde\KK_1=\widetilde\XX^T\widetilde\XX,
\end{align}%
we have%
 \begin{align}
A(\widetilde\KK_1,\KK_2)= c\frac{\Tr(\GG^T \SS_b \GG)}{\sqrt{\Tr(\GG^T\SS_t\GG)^2}} \label{eq:akt}
 \end{align}%
where%
\begin{align}
\SS_b&=\sum_{k=1}^K n_k (\mm_k - \mm) (\mm_k - \mm)^T,\label{eq:sb0}\\
\SS_t&=\sum_{i=1}^n (\xx_i-\mm) (\xx_i-\mm)^T,
 \end{align}
\end{theorem}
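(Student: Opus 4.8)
The plan is to evaluate the three traces appearing in the definition \eqref{eq:al} of the alignment one at a time, using only cyclic invariance of the trace and the special structure of $\YY$ in \eqref{eq:y}. It is convenient (and standard for LDA) to assume from the outset that the data is centered, $\mm=\mathbf 0$, so that $\SS_t=\XX\XX^T$ and $\SS_b=\sum_k n_k\mm_k\mm_k^T$; this is worth stating explicitly, since $\SS_b$ and $\SS_t$ are translation invariant whereas $A(\KK_1,\KK_2)$ itself genuinely depends on the choice of origin, and it is exactly this normalization that lets the raw kernel $\KK_1=\XX^T\XX$ reproduce the centered scatter matrices.

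First I would record the two facts about $\YY$ that do all the work. From \eqref{eq:y}, distinct columns of $\YY$ have disjoint support and column $k$ has squared norm $n_k\cdot n_k^{-1}=1$, so $\YY^T\YY=I_K$; consequently $\YY\YY^T$ is a projector, $\Tr(\KK_2\KK_2)=\Tr\big((\YY\YY^T)^2\big)=\Tr(\YY\YY^T)=\Tr(\YY^T\YY)=K$, and $c=1/\sqrt{\Tr(\YY\YY^T)^2}=1/\sqrt K$ is indeed independent of $\XX$. Also, the $k$-th column of $\XX\YY$ is $\sum_{\xx_i\in k}\xx_i/\sqrt{n_k}=\sqrt{n_k}\,\mm_k$. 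Hence $\Tr(\KK_1\KK_2)=\Tr(\XX^T\XX\YY\YY^T)=\Tr\big((\XX\YY)(\XX\YY)^T\big)=\|\XX\YY\|_F^2=\sum_k n_k\|\mm_k\|^2=\Tr\SS_b$, where the last equality uses $\mm=\mathbf 0$ (in general $\sum_k n_k\|\mm_k\|^2=\Tr\SS_b+n\|\mm\|^2$, via $\sum_k n_k\mm_k=n\mm$). Finally $\Tr(\KK_1\KK_1)=\Tr\big((\XX^T\XX)^2\big)=\Tr\big((\XX\XX^T)^2\big)=\Tr\SS_t^2$. Substituting these three values into \eqref{eq:al} gives \eqref{eq:ak}.

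For the projected statement \eqref{eq:akt} I would repeat the same computations verbatim with $\widetilde\KK_1=\widetilde\XX^T\widetilde\XX=\XX^T\GG\GG^T\XX$ in place of $\KK_1$: cyclic invariance gives $\Tr(\widetilde\KK_1\KK_2)=\Tr(\GG^T\XX\YY\YY^T\XX^T\GG)=\Tr(\GG^T\SS_b\GG)$ and $\Tr(\widetilde\KK_1\widetilde\KK_1)=\Tr\big((\GG^T\XX\XX^T\GG)^2\big)=\Tr\big((\GG^T\SS_t\GG)^2\big)$, while $\Tr(\KK_2\KK_2)=K$ is unaffected by $\GG$; plugging into \eqref{eq:al} yields \eqref{eq:akt}.

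There is no deep obstacle here: the argument is pure trace bookkeeping. The two points that need care are (i) carrying the $1/\sqrt{n_k}$ normalization correctly through the column computation of $\XX\YY$, since this is precisely what turns a sum over data points into the class-size-weighted sum over class means that defines $\SS_b$ in \eqref{eq:sb0}, and (ii) being explicit about the centering assumption, since that is where the identities $\SS_t=\XX\XX^T$ and $\Tr(\KK_1\KK_2)=\Tr\SS_b$ come from.
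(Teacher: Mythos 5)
Your proposal is correct and follows essentially the same route as the paper: assume centered data, use the identities $\SS_b=\XX\YY\YY^T\XX^T$ and $\SS_t=\XX\XX^T$, and finish by cyclic invariance of the trace applied to the three terms of Eq.~(\ref{eq:al}), with the projected case handled identically after replacing $\XX$ by $\GG^T\XX$. The only difference is that you prove the scatter-matrix identities (and the facts $\YY^T\YY=I_K$, $\Tr(\KK_2\KK_2)=K$) explicitly, whereas the paper cites them as Lemma~\ref{lm:lm1} from prior work; your explicit centering caveat is also consistent with the paper's stated assumption.
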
%
Theorem \ref{th:th1} shows that kernel alignment can be expressed using scatter matrices $\SS_b$ and $\SS_t$.
In applications, we adjust $\GG$ such that kernel alignment is maximized, i.e., we solve the following problem:
\begin{align}
&\max_{\GG}  \frac{\Tr(\GG^T \SS_b \GG)}{\sqrt{\Tr(\GG^T\SS_t\GG)^2}}.
\label{eq:j10}
\end{align}%
In general, columns of $G$ are assumed to be linearly independent.

A striking feature of this kernel alignment problem is that it is very similar to classic LDA.

\subsection{Proof of Theorem 1 and Analysis}

Here we note a useful lemma and then prove Theorem \ref{th:th1}.

In most data analysis, data are centered, i.e., $\sum_i \xx_i = {\bf 0}$.
Here we assume data is already centered. The following results remain correct
if data is not centered.
We have the following relations:%
\begin{lemma}
\label{lm:lm1}
Scatter matrices $\SS_b, \SS_t$ can be expressed as:
\begin{align}
\SS_b=&\XX\YY\YY^T\XX^T, \label{eq:sb1}\\
\SS_t=&\XX\XX^T. \label{eq:st1}
\end{align}
\end{lemma}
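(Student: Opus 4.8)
The plan is a short direct computation resting on two ingredients: the centering hypothesis $\mm = \mathbf{0}$ stated just before the lemma, and the explicit form~\eqref{eq:y} of the normalized indicator matrix $\YY$. I would begin with the easy identity~\eqref{eq:st1}. With $\mm = \mathbf{0}$, the defining sum for $\SS_t$ reduces to $\sum_{i=1}^n \xx_i\xx_i^T$, and since the $\xx_i$ are exactly the columns of $\XX = (\xx_1,\dots,\xx_n)$, this sum of rank-one terms is precisely the product $\XX\XX^T$, which is~\eqref{eq:st1}.

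The step carrying all the content is~\eqref{eq:sb1}, and the idea is to identify the columns of the $p\times K$ matrix $\XX\YY$. The $k$-th column of $\XX\YY$ is $\sum_i \YY_{ik}\,\xx_i$, and by~\eqref{eq:y} the summand $\YY_{ik}\,\xx_i$ equals $\xx_i/\sqrt{n_k}$ when $\xx_i$ belongs to class $k$ and vanishes otherwise; summing gives $\big(\sum_{\xx_i\in k}\xx_i\big)/\sqrt{n_k} = n_k\mm_k/\sqrt{n_k} = \sqrt{n_k}\,\mm_k$, using $\mm_k = \sum_{\xx_i\in k}\xx_i/n_k$. Hence column $k$ of $\XX\YY$ is $\sqrt{n_k}\,\mm_k$. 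Writing $\XX\YY\YY^T\XX^T = (\XX\YY)(\XX\YY)^T$ and expanding this outer product column by column then yields $\sum_{k=1}^K (\sqrt{n_k}\,\mm_k)(\sqrt{n_k}\,\mm_k)^T = \sum_{k=1}^K n_k\,\mm_k\mm_k^T$, which under $\mm = \mathbf{0}$ is exactly the definition~\eqref{eq:sb0} of $\SS_b$; this is~\eqref{eq:sb1}.

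I do not expect a genuine obstacle here: once the column identity ``column $k$ of $\XX\YY$ equals $\sqrt{n_k}\,\mm_k$'' is in place, both equalities follow at once. The only point worth flagging is the role of centering: without $\mm = \mathbf{0}$ one instead obtains $\SS_t = \XX\XX^T - n\mm\mm^T$ and $\SS_b = \XX\YY\YY^T\XX^T - n\mm\mm^T$, so the clean factorizations genuinely rely on the centering assumption, consistent with the remark preceding the lemma.
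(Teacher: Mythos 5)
Your proof is correct and is essentially the paper's own argument: the paper disposes of Lemma~\ref{lm:lm1} by citation (Theorem 3 of \cite{ding2004k}), but its proof of the multi-label Lemma~\ref{lm:lm2} --- of which this is the special case $\Omega=\II$, $\nt_k=n_k$ --- rests on exactly your column identification, namely that column $k$ of $\XX\YY$ equals $\sqrt{n_k}\,\mm_k$, so that $\XX\YY\YY^T\XX^T=\sum_k n_k\mm_k\mm_k^T$ and $\XX\XX^T=\sum_i\xx_i\xx_i^T$. Your closing observation that without centering both identities acquire a $-n\,\mm\mm^T$ correction is accurate and in fact more precise than the paper's passing remark that the results ``remain correct if data is not centered.''
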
%
These results are previously known, for example, Theorem 3 of
\cite{ding2004k}.

{\bf Proof of Theorem \ref{th:th1}}.
To prove Eq.(\ref{eq:ak}), we substitute $\KK_1, \KK_2$ into Eq.(\ref{eq:al}) and obtain, noting $\Tr(\AA\BB) = \Tr(\BB\AA)$,
\begin{align}
A(\KK_1,\KK_2) &= \frac{\Tr(\XX\YY\YY^T \XX^T )}{\sqrt{\Tr(\XX\XX^T)^2}\sqrt{\Tr(\YY\YY^T)^2}}
=c\frac{\Tr\SS_b}{\sqrt{\Tr\SS_t^2}}.  \nonumber
\end{align}%
where we used Lemma \ref{lm:lm1}. $c = 1/\sqrt{\Tr(\YY\YY^T)^2}$ is a constant independent of data $\XX$.

To prove Eq.(\ref{eq:akt}),
\begin{align}
&A(\widetilde\KK_1,\KK_2)
=c\frac{\Tr( \GG^T\XX\YY\YY^T \XX^T \GG ) }{\sqrt{\Tr(\GG^T\XX\XX^T\GG)^2}}
= c\frac{\Tr(\GG^T \SS_b \GG)}{\sqrt{\Tr(\GG^T\SS_t\GG)^2}}, \nonumber
\end{align}%
thus we obtain Eq.(\ref{eq:akt}) using Lemma \ref{lm:lm1}.

\subsection{Relation to Classical LDA}

In classical LDA, the between-class scatter matrix $\SS_b$ is defined as Eq.(\ref{eq:sb0}), and
the within-class scatter matrix $\SS_w$ and total scatter matrix $\SS_t$ are defined as:
\begin{align}
\SS_w=\sum_{k=1}^K \sum_{\xx_i \in k}(\xx_i-\mm_k)(\xx_i-\mm_k)^T,\;
\SS_t=\SS_b+\SS_w,
\end{align}%
where $\mm_k$ and $\mm$ are class means. Classical LDA finds a projection matrix $\GG \in \Re^{p \times (K-1)}$ that
minimizes $\SS_w$ and maximizes $\SS_b$ using
 the following objective:
\begin{align}
\label{eq:jlda1}
\max_{\GG}  \Tr \frac{\GG^T\SS_b\GG}{\GG^T\SS_w\GG},
\end{align}%
or%
\begin{align}
\label{eq:jlda2}
\max_{\GG} \frac{\Tr(\GG^T\SS_b\GG)}{\Tr(\GG^T\SS_w\GG)}.
\end{align}%
Eq.(\ref{eq:jlda2}) is also called trace ratio (TR) problem \cite{wang2007trace}.
It is easy to see~\footnote{
Eq.(\ref{eq:jlda2}) is equivalent to
$\min \frac{\Tr(\GG^T\SS_w\GG)}{\Tr(\GG^T\SS_b\GG)}$,
 which is $\min \left( \frac{\Tr(\GG^T\SS_w\GG)}{\Tr(\GG^T\SS_b\GG)} + 1 \right)$.
Reversing to maximization and using $\SS_t=\SS_b+\SS_w$, we obtain Eq.(\ref{eq:jlda3}).
}
that Eq.(\ref{eq:jlda2}) can be expressed as
\begin{align}
\label{eq:jlda3}
\max_{\GG} \frac{\Tr(\GG^T\SS_b\GG)}{\Tr(\GG^T\SS_t\GG)}.
\end{align}%
As we can see, kernel alignment LDA objective function Eq.(\ref{eq:j10}) is very similar to Eq.(\ref{eq:jlda3}). Thus kernel alignment provides an interesting alternative explanation of LDA.
In fact, we can similarly show that
in Eq.(\ref{eq:j10}), $\SS_w$ is also maximized as in the standard LDA.
First, Eq.(\ref{eq:j10}) is equivalent to
$$
\max_G \Tr(G^T S_b G) \;\; s.t. \;\; \Tr(G^T S_t G)^2 = \eta,
$$
where $\eta$ is a fixed-value. The precise value of $\eta$ is unimportant, since the scale of $G$ is undefined in LDA: if $G^*$ is an optimal solution,
and $r$ is any real number,
$G^{**} = r G^*$ is also an optimal solution with the same optimal objective function value.
The above optimization is approximately equivalent to
$$
\max_G \Tr(G^T S_b G) \;\;s.t. \;\; \Tr (G^T S_t G) = \eta,
$$
This is same as
$$ \max_G \Tr(G^T S_b G) \;\;  s.t. \;\; \Tr (G^T S_w G) = \eta -  \Tr(G^T S_b G),
$$
In other words, $S_b$ is maximized while $S_w$ is minimized --- recovering the LDA main theme.

\section{Computational Algorithm}

In this section, we develop efficient algorithm to solve kaLDA objective function Eq.(\ref{eq:j10}):
\begin{align}
&\max_{\GG} J_1 = \frac{\Tr(\GG^T \SS_b \GG)}{\sqrt{\Tr(\GG^T\SS_t\GG)^2}}  , \;\; s.t. \;\; \GG^T \GG = \II.
\label{eq:j1}
\end{align}%
The condition $\GG^T \GG = \II$ ensures different columns of $\GG$ mutually independent.
The gradient of $J_1(\GG)$ is%
\begin{align}
\nabla J_1  \triangleq  \frac{\partial{J_1}}{\partial{\GG}}
=  2 \frac{\AA}{\sqrt{\Tr \DD^2}} - 2 \frac{\Tr\BB}{ (\Tr\DD^2)^{\frac{3}{2}}}\CC\DD, \label{eq:dg}
\end{align}%
where $
\AA = \SS_b \GG,\;
\BB = \GG^T \AA, \;
\CC = \SS_t \GG, \;
\DD = \GG^T \CC.$%

Constraint $\GG^T \GG = \II$ enforces $\GG$ on the Stiefel manifold. Variations of $\GG$ on this manifold is parallel transport, which gives some restriction to the gradient. This has been been worked out in~\cite{Edelman98thegeometry}.
The gradient that reserves the manifold structure is
 \begin{align}
 \label{eq:lag1}
 \nabla J_1   - \GG [\nabla J_1]^T \GG.
 \end{align}%
 Thus the algorithm computes the new $\GG$ is given as follows:
 \begin{align}
 \label{eq:lag2}
 \GG \leftarrow  \GG - \eta (\nabla J_1   - \GG [\nabla J_1]^T \GG ).
 \end{align}%
 The step size $\eta$ is usually chosen as:
 \begin{align}
 \label{eq:lag3}
 \eta = \tau \| \GG \|_1/ \|\nabla J_1 - \GG (\nabla J_1)^T \GG \|_1, \; \tau = 0.001 \sim 0.01.
 \end{align}%
 where $\| \GG \|_1 = \sum_{ij} |\GG_{ij}|.$

Occasionally, due to the loss of numerical accuracy, we use projection $\GG \leftarrow \GG (\GG^T \GG)^{-\half}$ to restore $\GG^T \GG = \II$.  Starting with the standard LDA solution of $\GG$,
this algorithm is iterated until the algorithm converges to a local optimal solution. In fact, objective function will converge quickly when choosing $\eta$ properly. Figure \ref{fig:j1} shows that $J_1$ converges in about 200 iterations when $\tau=0.001$, for datasets ATT, Binalpha, Mnist, and Umist (more details about the datasets will be introduced in experiment section). In summary, kernel alignment LDA (kaLDA) procedure is shown in Algorithm \ref{alg:alda}.

\begin{algorithm}[t]
\small
\caption{$[\GG]=kaLDA(\XX, \YY)$}
\label{alg:alda}
\begin{algorithmic}[1]
\Require Data matrix $\XX \in \Re^{p \times n}$, class indicator matrix $\YY \in \Re^{n \times K}$
\Ensure Projection matrix $\GG \in \Re^{p \times k}$
\State Compute $\SS_b$ and $\SS_t$ using Eq.(\ref{eq:sb1}) and Eq.(\ref{eq:st1})
\State Initialize $\GG$ using classical LDA solution
\Repeat
\State Compute gradient using Eq.(\ref{eq:dg})
\State Update $\GG$ using Eq.(\ref{eq:lag2})
\Until $J_1$ Converges
\end{algorithmic}
\end{algorithm}

\begin{figure*}[!h]
\centering
\begin{subfigure}{.5\textwidth}
  \centering
  \includegraphics[width=.9\textwidth]{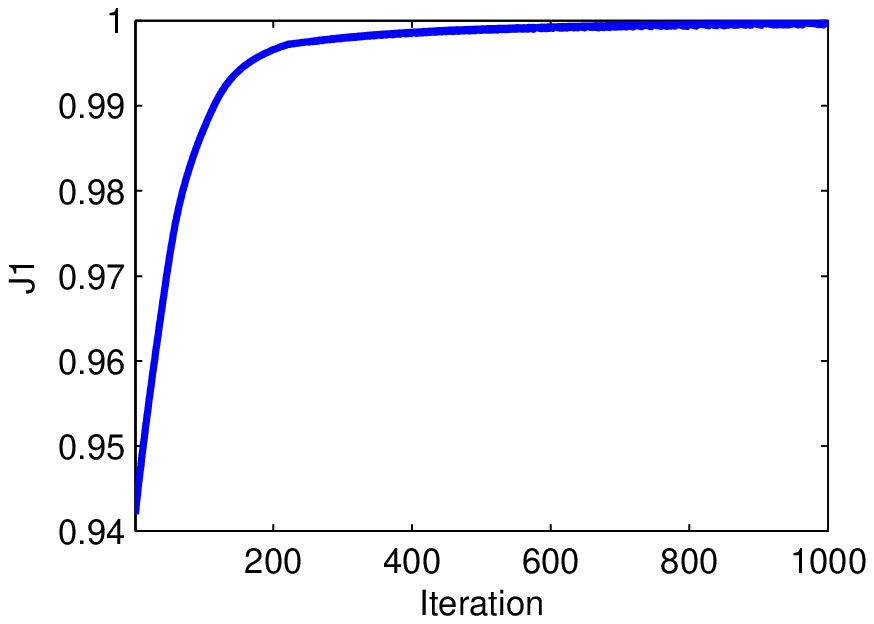}
  \caption{ATT}
  \label{fig:j1_1}
\end{subfigure}%
\begin{subfigure}{.5\textwidth}
  \centering
  \includegraphics[width=.9\textwidth]{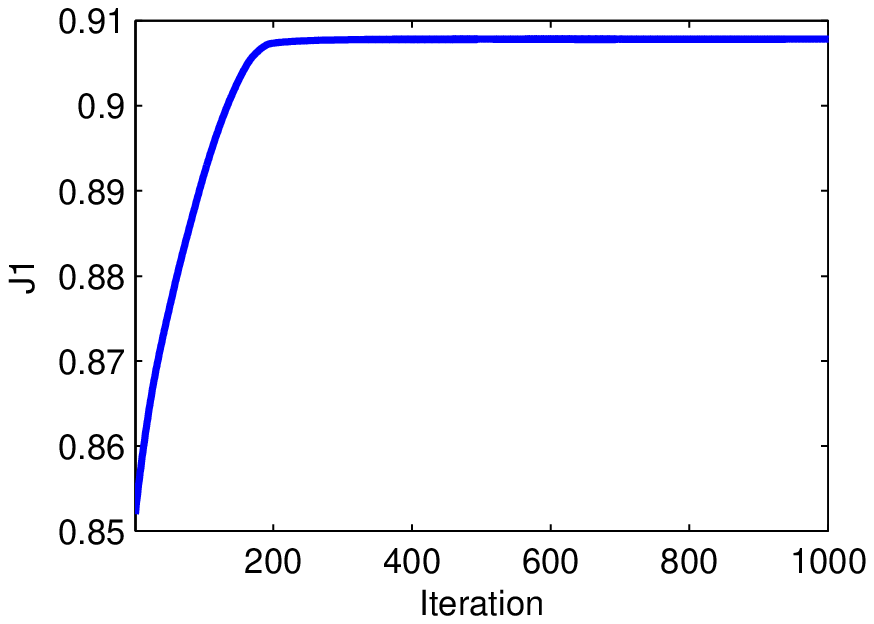}
  \caption{Binalpha}
  \label{fig:j1_2}
\end{subfigure}\\
\begin{subfigure}{.5\textwidth}
  \centering
  \includegraphics[width=.9\textwidth]{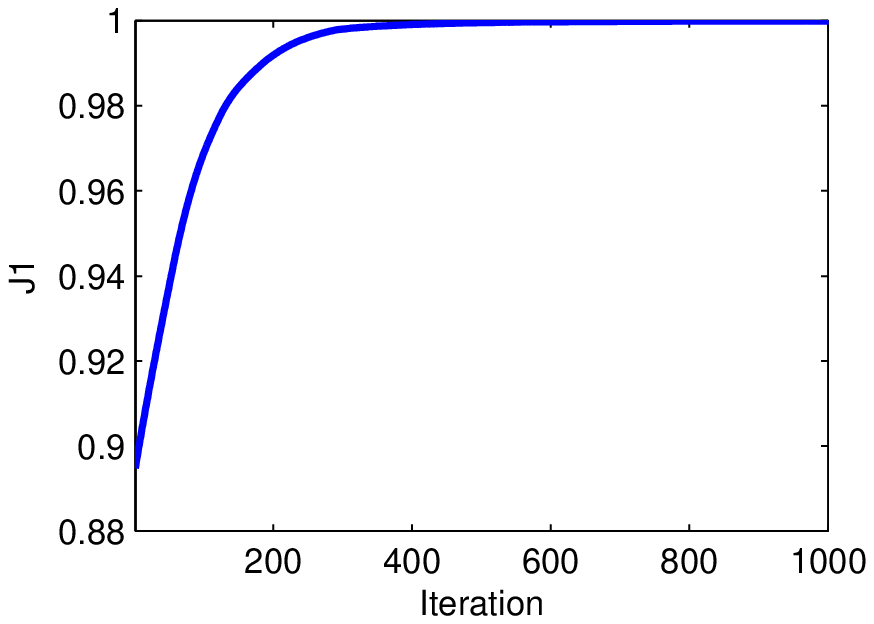}
  \caption{Mnist}
  \label{fig:j1_3}
\end{subfigure}%
\begin{subfigure}{.5\textwidth}
  \centering
  \includegraphics[width=.9\textwidth]{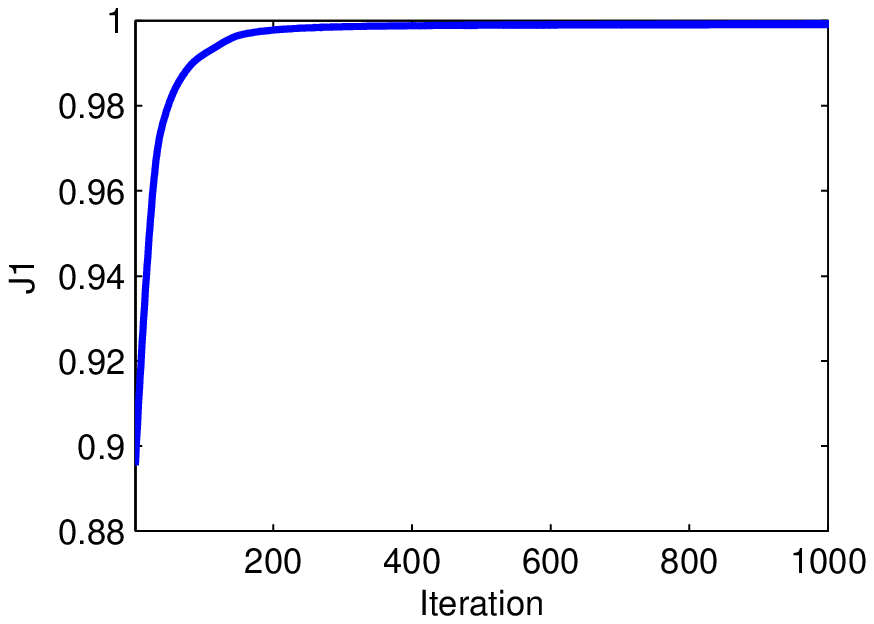}
  \caption{Umist}
  \label{fig:j1_4}
\end{subfigure}
\caption{Objective J1 converges using Stiefel-manifold gradient descent algorithm ($\tau=0.001$).}
\label{fig:j1}
\end{figure*}

To show the effectiveness of proposed kaLDA, we visualize a real dataset in 2-D subspace in Figure \ref{fig:2d}. In this example, we take 3 classes of 644-dimension Umist data, 18 data points in each class. Figure \ref{fig:raw} shows the original data projected in 2-D PCA subspace. Blue points are in class 1; red circle points are in class 2; black square points are in class 3. Data points from the three classes are mixed together in 2-D PCA subspace. It is difficult to find a linear boundary to separate points of different classes. Figure \ref{fig:lda} shows the data in 2-D standard LDA subspace. We can see that data points in different classes have been projected into different clusters. Figure \ref{fig:alda} shows the data projected in 2-D kaLDA subspace. Compared to Figure \ref{fig:lda}, the within-class distance in Figure \ref{fig:alda} is much smaller. The distance between different classes is larger.

\begin{figure*}[!h]
\centering
\begin{subfigure}{.45\textwidth}
  \centering
  \includegraphics[width=\textwidth]{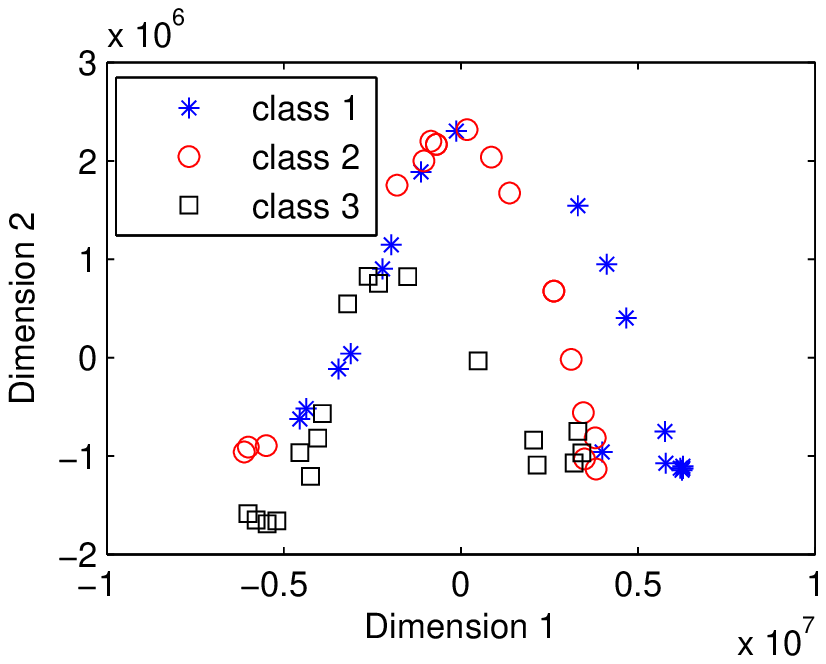}
  \caption{2-D PCA subspace.}
  \label{fig:raw}
\end{subfigure}
\begin{subfigure}{.45\textwidth}
  \centering
  \includegraphics[width=\textwidth]{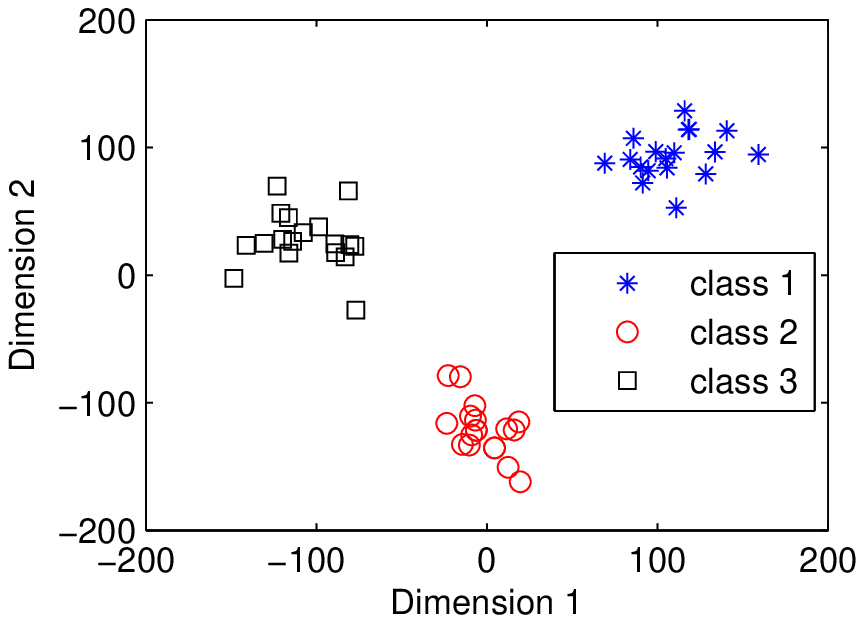}
  \caption{2-D LDA subspace.}
  \label{fig:lda}
\end{subfigure}\\
\begin{subfigure}{.45\textwidth}
  \centering
  \includegraphics[width=\textwidth]{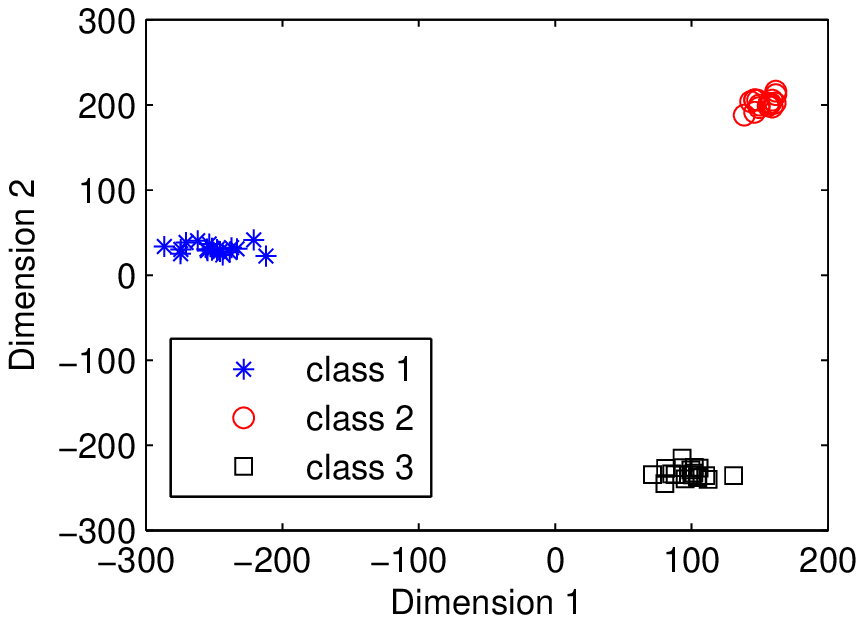}
  \caption{2-D kaLDA subspace.}
  \label{fig:alda}
\end{subfigure}%
\caption{Visualization of Umist data in 2-D PCA, 2-D LDA and 2-D kaLDA subspace.}
\label{fig:2d}
\end{figure*}

\section{Extension to Multi-label Data}

Multi-label problem arises frequently in image and video annotations, multi-topic text categorization, music classification. etc.\cite{wang2010multi}.
In multi-label data,
a data point could have several class labels (belonging to several classes).
For example, an image could have ``cloud", ``building", ``tree" labels.
This is different from the case of single-label problem,
where one point can have only one class label.
Multi-label is very natural and common in our everyday life. For example,
a film can be simultaneously classified as ``drama", ``romance", ``historic" (if it is about a true story).
A news article can have topic labels such as ``economics", ``sports", etc.

Kernel alignment approach can be easily and naturally extended to
multi-label data, because the class label kernel can be clearly and unambiguously defined using class label matrix $\ZZ$ on both single label and multi-label data sets. The data kernel is defined as usual.
In the following we further develop this approach.

One important result of our kernel alignment approach for single label data is that
it has close relationship with LDA.
For multi-label data, each data point could belong to several classes. The standard scatter matrices $\SS_b, \SS_w$ are
ambiguous, because $\SS_b,\SS_w$ are only defined for single label data where each data point belongs to one class only.
However, our kernel alignment approach on multi-label data leads to new definitions of scatter matrices and similar objective function;
this can be viewed as the generalization
of LDA from single-label data to multi-label data via kernel alignment approach.

Indeed, the new scatter matrices we obtained from kernel alignment approach are
identical to the so-called ``multi-label LDA"~\cite{wang2010multi} developed from a class-separate, probabilistic point of view, very different
from our point of view. The fact that these two approaches lead to the same set of scatter matrices show that the resulting multi-label LDA framework
has a broad theoretical basis.

We first present some notations for multi-label data and then describe the kernel alignment approach for multi-label data in Theorem \ref{th:th2}.
The class label matrix $\ZZ \in \Re^{n \times K}$ for data $\XX \in \Re^{p \times n}$ is given as:
\begin{align}
\label{eq:z}
\ZZ_{ik} =
\begin{cases}
1, & \mbox{if point } i\mbox{ is in class } k. \\
0, & \mbox{otherwise}.
\end{cases}
\end{align}%
Let $\nt_k=\sum_{i=1}^n\ZZ_{ik}$ be the number of data points in class $k$. Note that for multi-label data,
$\sum_{k=1}^K \nt_k > n$.
The {\bf normalized} class indicator matrix $\YYt \in \Re^{n \times K}$ is given as:
\begin{align}
\label{eq:y2}
\YYt_{ik} =
\begin{cases}
\frac{1}{\sqrt{\nt_k}}, & \mbox{if point } i\mbox{ is in class } k. \\
0, & \mbox{otherwise}.
\end{cases}
\end{align}%

Let $\rho_i = \sum_{k=1}^K \ZZ_{ik}$ be the number of classes that $\xx_i$
belongs to. Thus $\rho_i$ are the weights of $\xx_i$. Define the diagonal
weight matrix $\Omega = \diag(\rho_1,\cdots, \rho_n)$. The kernel alignment
formulation for multi-label data can be stated as
\begin{theorem}
\label{th:th2}
For multi-label data $\XX$, let the data kernel and class label kernel be
\begin{align}
\KK_1 = \Omega^{\frac{1}{2}} \XX^T \XX \Omega^{\frac{1}{2}}, \;\; \KK_2 = \Omega^{-\frac{1}{2}} \YYt \YYt^T \Omega^{-\frac{1}{2}}.
\end{align}%
We have the alignment
\begin{align}
A(\KK_1,\KK_2)
= c\frac{\Tr\SS_b}{\sqrt{\Tr\SS_t^2}} \label{eq:ak2}
\end{align}%
where $c = 1/\sqrt{\Tr(\Omega^{-1}\YYt \YYt^T)^2}$ is a constant independent of data $\XX$, and
$\SS_b ,\; \SS_t$ are given in Eqs.(\ref{eq:sb3}, \ref{eq:st3}).

Furthermore, let $\GG \in \Re^{p \times k}$ be the linear transformation to a $k$-dimensional subspace,
\begin{align}
\widetilde \XX = \GG^T \XX, \;\; \widetilde\KK_1= \Omega^{1/2}\widetilde\XX^T\widetilde\XX \Omega^{1/2},
 \end{align}%
we have
\begin{align}
A(\widetilde\KK_1,\KK_2)
= c\frac{\Tr(\GG^T \SS_b \GG)}{\sqrt{\Tr(\GG^T\SS_t\GG)^2}} \label{eq:akt2}
\end{align}%
\end{theorem}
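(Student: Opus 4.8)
\textbf{Proof proposal for Theorem~\ref{th:th2}.} The plan is to follow the proof of Theorem~\ref{th:th1} almost verbatim: substitute $\KK_1$ and $\KK_2$ into the alignment formula Eq.(\ref{eq:al}) and simplify the three traces $\Tr(\KK_1\KK_2)$, $\Tr(\KK_1\KK_1)$, $\Tr(\KK_2\KK_2)$ using only the cyclic property of the trace together with the elementary identities $\Omega^{1/2}\Omega^{-1/2}=\II$, $\Omega^{1/2}\Omega^{1/2}=\Omega$, and $\Omega^{-1/2}\Omega^{-1/2}=\Omega^{-1}$. The whole role of the weight matrix $\Omega$ is precisely to be absorbed under the trace: in $\Tr(\KK_1\KK_2)$ the inner pair $\Omega^{1/2}\Omega^{-1/2}$ collapses and, after one cyclic rotation, the outer $\Omega^{1/2}$ and $\Omega^{-1/2}$ cancel as well, leaving $\Tr(\XX^T\XX\,\YYt\YYt^T)=\Tr(\XX\YYt\YYt^T\XX^T)$; in $\Tr(\KK_1\KK_1)$ the two inner half-powers merge into a single $\Omega$ and a cyclic rotation removes the outer ones, giving $\Tr\big((\XX\Omega\XX^T)^2\big)$; in $\Tr(\KK_2\KK_2)$ the same bookkeeping yields $\Tr\big((\Omega^{-1}\YYt\YYt^T)^2\big)$, which is exactly $1/c^2$ and is manifestly independent of $\XX$. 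Assembling the three pieces gives Eq.(\ref{eq:ak2}), provided we identify $\XX\YYt\YYt^T\XX^T$ with $\SS_b$ of Eq.(\ref{eq:sb3}) and $\XX\Omega\XX^T$ with $\SS_t$ of Eq.(\ref{eq:st3}).

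For the subspace claim I would rerun the computation with $\XX$ replaced by $\widetilde\XX=\GG^T\XX$ in $\KK_1$ only. The $\Omega^{\pm1/2}$ factors cancel exactly as before, so $\Tr(\widetilde\KK_1\KK_2)=\Tr(\GG^T\XX\YYt\YYt^T\XX^T\GG)=\Tr(\GG^T\SS_b\GG)$ and $\Tr(\widetilde\KK_1\widetilde\KK_1)=\Tr\big((\GG^T\XX\Omega\XX^T\GG)^2\big)=\Tr(\GG^T\SS_t\GG)^2$, while $\Tr(\KK_2\KK_2)$ is untouched, so $A(\widetilde\KK_1,\KK_2)$ equals the right-hand side of Eq.(\ref{eq:akt2}).

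The genuinely non-routine step is not the trace algebra but the two identifications $\XX\YYt\YYt^T\XX^T=\SS_b$ and $\XX\Omega\XX^T=\SS_t$, i.e. proving the multi-label analogue of Lemma~\ref{lm:lm1} and checking it agrees with the scatter matrices of multi-label LDA~\cite{wang2010multi}. This needs the correct notion of centering for weighted points --- the weighted total mean $\mm=(\sum_i\rho_i)^{-1}\sum_i\rho_i\xx_i$ and the within-class means $\widetilde\mm_k=\nt_k^{-1}\sum_{i\in k}\xx_i$ --- followed by a direct expansion verifying that, after centering, $\sum_k\nt_k(\widetilde\mm_k-\mm)(\widetilde\mm_k-\mm)^T=\XX\YYt\YYt^T\XX^T$ and $\sum_i\rho_i(\xx_i-\mm)(\xx_i-\mm)^T=\XX\Omega\XX^T$, the cross terms vanishing by the centering choice. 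I expect this centering and cross-term bookkeeping to be the main obstacle, just as the (easier) centered-data reduction was the real content of Lemma~\ref{lm:lm1} in the single-label case.
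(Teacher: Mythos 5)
Your proposal matches the paper's proof essentially verbatim: the same cancellation of the $\Omega^{\pm 1/2}$ factors under the cyclic trace yields $\Tr(\XX\YYt\YYt^T\XX^T)$, $\Tr\big((\XX\Omega\XX^T)^2\big)$ and $\Tr\big((\Omega^{-1}\YYt\YYt^T)^2\big)$, and your "non-routine" identifications $\XX\YYt\YYt^T\XX^T=\SS_b$, $\XX\Omega\XX^T=\SS_t$ are exactly the paper's Lemma~\ref{lm:lm2}, proved there under the same weighted centering $\sum_i\rho_i\xx_i={\bf 0}$ with the same means $\mm_k$ and $\mm$ you specify. The argument is correct and takes the same route as the paper.
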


The matrices $\SS_b, \SS_t$ in Theorem \ref{th:th2} are defined as:
\begin{align}
&\SS_b = \sum_{k=1}^K \nt_k  (\mm_k-\mm)(\mm_k-\mm)^T,  \label{eq:sb3}\\
&\SS_t= \sum_{k=1}^K \sum_{i=1}^n\ZZ_{ik}(\xx_i-\mm)(\xx_i-\mm)^T, \label{eq:st3}
\end{align}
where  $\mm_k$ is the mean of class $k$ and $\mm$ is global mean, defined as:
\begin{align}
\mm_k = \frac{\sum_{i=1}^n \ZZ_{ik} \xx_i}{\nt_k }, \quad
\mm = \frac{ \sum_{i=1}^n \rho_{i}\xx_i}{ \sum_{k=1}^K \nt_k}.
\end{align}%
Therefore, we can seek an optimal subspace for multi-label data by solving
Eq.(\ref{eq:j1}) with $\SS_b, \SS_t$ given in Eqs.(\ref{eq:sb3},\ref{eq:st3})

\subsection{Proof of Theorem 2 and Equivalence to Multi-label LDA}

Here we note a useful lemma for multi-label data and then prove Theorem \ref{th:th2}.
We consider the case the data is centered, i.e., $\sum_{i=1}^n \rho_i \xx_i ={\bf 0}$.
The results also hold when data is not centered, but the proofs are slightly complicated.

\begin{lemma}
\label{lm:lm2}
For multi-label data, $\SS_b, \SS_t$ of Eqs.(\ref{eq:sb3},\ref{eq:st3}) can be expressed as
\begin{align}
\SS_b=&\XX\YYt \YYt^T\XX^T \label{eq:sb2}\\
\SS_t=&\XX \Omega\XX^T \label{eq:st2}
\end{align}%
\end{lemma}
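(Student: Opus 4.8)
The plan is to verify the two identities \eqref{eq:sb2} and \eqref{eq:st2} by a direct index computation, mirroring the single-label argument of Lemma \ref{lm:lm1} but keeping track of the multiplicities $\rho_i$ and $\nt_k$. First I would handle $\SS_t$, since it is the simpler of the two. Writing out $\XX\Omega\XX^T = \sum_{i=1}^n \rho_i \xx_i \xx_i^T$ and using $\rho_i = \sum_{k=1}^K \ZZ_{ik}$, this is exactly $\sum_{k=1}^K\sum_{i=1}^n \ZZ_{ik}\xx_i\xx_i^T$. It then remains to show that, under the centering assumption $\sum_i \rho_i \xx_i = \mathbf{0}$ (equivalently $\mm = \mathbf{0}$), the expression $\sum_k\sum_i \ZZ_{ik}(\xx_i-\mm)(\xx_i-\mm)^T$ in \eqref{eq:st3} reduces to $\sum_k\sum_i \ZZ_{ik}\xx_i\xx_i^T$; expanding the quadratic and collecting the cross terms against $\sum_i\rho_i\xx_i$ makes the extra pieces vanish. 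This gives \eqref{eq:st2}.

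For $\SS_b$, I would start from the right-hand side: since $\YYt\YYt^T$ has $(i,j)$ entry $\sum_k \YYt_{ik}\YYt_{jk} = \sum_k \ZZ_{ik}\ZZ_{jk}/\nt_k$, we get
\begin{align}
\XX\YYt\YYt^T\XX^T = \sum_{k=1}^K \frac{1}{\nt_k}\Bigl(\sum_i \ZZ_{ik}\xx_i\Bigr)\Bigl(\sum_j \ZZ_{jk}\xx_j\Bigr)^T = \sum_{k=1}^K \nt_k\, \mm_k\mm_k^T, \nonumber
\end{align}
using the definition $\mm_k = (\sum_i \ZZ_{ik}\xx_i)/\nt_k$. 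On the other side, expanding \eqref{eq:sb3} gives $\SS_b = \sum_k \nt_k\mm_k\mm_k^T - (\sum_k \nt_k\mm_k)\mm^T - \mm(\sum_k \nt_k\mm_k)^T + (\sum_k\nt_k)\mm\mm^T$. The key observation is that $\sum_k \nt_k\mm_k = \sum_k\sum_i \ZZ_{ik}\xx_i = \sum_i \rho_i\xx_i$, which is $(\sum_k\nt_k)\mm$ by the definition of $\mm$, and which equals $\mathbf{0}$ under the centering assumption; hence all terms but the first drop out and $\SS_b = \sum_k \nt_k\mm_k\mm_k^T = \XX\YYt\YYt^T\XX^T$.

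The computations here are entirely routine, so there is no real obstacle; the one point demanding a little care is making sure the centering hypothesis $\sum_i \rho_i\xx_i = \mathbf{0}$ is invoked consistently — it is what identifies $\mm$ with the zero vector and is precisely what collapses the expanded scatter expressions onto their ``uncentered'' forms. (As the text notes, without centering one simply carries the $\mm$ terms along and the identities acquire correction terms, but the argument is structurally the same.) I would also remark in passing that, specialized to single-label data where $\rho_i \equiv 1$, $\Omega = \II$, $\nt_k = n_k$ and $\YYt = \YY$, this lemma recovers Lemma \ref{lm:lm1}, which serves as a useful sanity check.
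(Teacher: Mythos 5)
Your proof is correct and follows essentially the same route as the paper: a direct computation showing $\XX\YYt\YYt^T\XX^T=\sum_k \nt_k\mm_k\mm_k^T$ and $\XX\Omega\XX^T=\sum_i\rho_i\xx_i\xx_i^T$, identified with Eqs.(\ref{eq:sb3},\ref{eq:st3}) via the centering assumption $\sum_i\rho_i\xx_i={\bf 0}$. The only cosmetic difference is that you work entry-wise and spell out explicitly how centering kills the $\mm$-terms, whereas the paper uses the factorization $\XX\YYt=(\mm_1,\cdots,\mm_K)\,\diag(\sqrt{\nt_1},\cdots,\sqrt{\nt_K})$ and leaves that step implicit.
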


\begin{proof}
From the definition of $\mm_k$ and $\YYt$ in multi-label data, we have
$$ \XX\YYt = (\mm_1, \cdots, \mm_K)
\begin{pmatrix}
\sqrt{\nt_1} &   & \\
         & \ddots & \\
         &         & \sqrt{\nt_K}\\
\end{pmatrix}.$$
Thus
$ \XX \YYt \YYt^T \XX^T = \sum_{k=1}^K \nt_k \mm_k \mm_k^T$
recovers $\SS_b$ of Eq.(\ref{eq:sb3}).

To prove  Eq.(\ref{eq:st2}), note that
$\XX \Omega = (\rho_1 \xx_1, \cdots, \rho_n \xx_n),$
thus $\XX \Omega \XX^T = \sum_{i=1}^n \rho_i \xx_i \xx_i^T.$

\end{proof}

{\bf Proof of Theorem \ref{th:th2}}.
Using Lemma \ref{lm:lm2}, to prove Eq.(\ref{eq:ak2}),
\begin{align}
&A(\KK_1,\KK_2) = c\frac{\Tr(\XX \YYt \YYt^T\XX^T)}{\sqrt{\Tr(\XX \Omega \XX^T)^2}}
=c\frac{\Tr\SS_b}{\sqrt{\Tr\SS_t^2}},  \nonumber
\end{align}%
where $c = 1/\sqrt{\Tr(\Omega^{-1}\YYt \YYt^T)^2}$ is independent of $\XX$.

To prove Eq.(\ref{eq:akt2}),
\begin{align}
&A(\widetilde\KK_1,\KK_2)=c\frac{\Tr(\GG^T\XX \YYt \YYt^T\XX^T\GG)}{\sqrt{\Tr(\GG^T\XX\Omega\XX^T\GG)^2}} = c\frac{\Tr(\GG^T \SS_b \GG)}{\sqrt{\Tr(\GG^T\SS_t\GG)^2}}. \nonumber
\end{align}

For single-label data, $\rho_i=1, \; \Omega=\II, \; \nt_k = n_k$,  Eqs.(\ref{eq:sb2},\;\ref{eq:st2}) reduce to
Eqs.(\ref{eq:sb1},\;\ref{eq:st1}), and Theorem \ref{th:th2} reduces to Theorem \ref{th:th1}.

As we can see, surprisingly, the scatter matrices $\SS_b, \SS_t$ of Eqs.(\ref{eq:sb3}, \ref{eq:st3}) arising in Theorem \ref{th:th2} are identical to that in Multi-label LDA proposed in~\cite{wang2010multi}.
\begin{table}[t]
\centering
\small
\caption{Single-label datasets attributes.}
\label{tab:data1}
\scalebox{1.2}{
\begin{tabular}{c|ccc}
\hline\hline
Data  & n     & p     & k \\
\hline
Caltec07 & 210	&432&	7\\
Caltec20 & 1230 & 432   & 20 \\
MSRC  & 210   & 432   & 7 \\
ATT	&400&	644	&40\\
Binalpha&	1014&	320	&26\\
Mnist	&150	&784&	10\\
Umist	&360&	644	&20\\
Pie	&680&	1024&	68\\
\hline\hline
\end{tabular}}
\end{table}

\begin{table}[t]
\centering
\caption{Classification accuracy on Single-label datasets ($K-1$ dimension).}
\label{tab:acc}
\resizebox{\columnwidth}{!}{
\begin{tabular}{c|ccccccc}
\hline\hline
Data  & kaLDA  & LDA   & TR    & sdpLDA & MMC   & RLDA   & OCM \\
\hline
Caltec07 & 0.7524 & 0.6619 & 0.6762 & 0.5619 & 0.6000 & \textbf{0.7952}   & 0.7619 \\
Caltec20 & \textbf{0.7068} & 0.6320 & 0.4465 & 0.3386 & 0.5838 & 0.6812 & 0.6696 \\
MSRC  & \textbf{0.7762} & 0.6857 & 0.5714 & 0.5952 & 0.5667 & 0.7333  & 0.7286 \\
ATT   & \textbf{0.9775} & 0.9750 & 0.9675 & 0.9750 & 0.9750 & 0.9675 & 0.9675 \\
Binalpha & 0.7817 & 0.6078 & 0.4620 & 0.2507 & 0.7638 & 0.7983   & \textbf{0.8204} \\
Mnist & \textbf{0.8800} & 0.8733 & 0.8667 & 0.8467 & 0.8467 & 0.8667  & 0.8467 \\
Umist & 0.9900 & 0.9900 & \textbf{0.9917} & 0.9133 & 0.9633 & 0.9800   & 0.9783 \\
Pie   & 0.8765 & \textbf{0.8838} & 0.8441 & 0.8632 & 0.8676 & 0.6515   & 0.6515 \\
\hline\hline
\end{tabular}}
\end{table}

\section{Related Work}

Linear Discriminant Analysis (LDA) is a widely-used dimension reduction and subspace learning algorithm. There are many LDA reformulation publications in recent years. Trace Ratio problem is to find a subspace transformation matrix $\GG$ such that the within-class distance is minimized and the between-class distance is maximized. Formally, Trace Ratio maximizes the ratio of two trace terms, $\max_{\GG}\Tr(\GG^T\SS_b\GG)/\Tr(\GG^T\SS_t\GG)$~ \cite{wang2007trace,jia2009trace}, where $\SS_t$ is total scatter matrix and $\SS_b$ is between-class scatter matrix. Other popular LDA approach includes, regularized LDA(RLDA) \cite{guo2007regularized}, Orthogonal Centroid Method (OCM) \cite{park2003lower}, Uncorrelated LDA(ULDA)~\cite{ye2005characterization}, Orthogonal LDA (OLDA)~\cite{ye2005characterization}, etc.. These approaches mainly compute the eigendecomposition of matrix $\SS_t^{-1}\SS_b$, but use different formulation of total scatter matrix $\SS_t$~\cite{ye2010discriminant}.

Maximum Margin Criteria (MMC) \cite{li2006efficient} is a simpler and more efficient method. MMC finds a subspace projection matrix $\GG$ to maximize $\Tr(\GG^T(\SS_b-\SS_w)\GG)$. Though in a different way, MMC also maximizes between-class distance while minimizing within-class distance. Semi-Definite Positive LDA (sdpLDA) \cite{kong2012semi} solves the maximization of $\Tr(\GG^T(\SS_b-\lambda_1\SS_w)\GG)$, where $\lambda_1$ is the largest eigenvalue of $\SS_w^{-1}\SS_b$. sdpLDA is derived from the maximum margin principle.

Multi-label problem arise frequently in image and video annotations and many other related applications, such as multi-topic text categorization \cite{wang2010multi}. There are many Multi-label dimension reduction approaches, such as Multi-label Linear Regression (MLR), Multi-label informed Latent Semantic Indexing (MLSI) \cite{yu2005multi}, Multi-label Dimensionality reduction via Dependence Maximization (MDDM) \cite{zhang2010multilabel}, Multi-Label Least Square (MLLS) \cite{ji2008extracting}, Multi-label Linear Discriminant Analysis (MLDA) \cite{wang2010multi}.

\section{Experiments}

\begin{table}[t]
\centering
\small
\caption{Multi-label datasets attributes.}
\label{tab:data2}
\scalebox{1.2}{
\begin{tabular}{c|ccc}
\hline\hline
Data  & n     & p     & k \\
\hline
    MSRC-MOM  & 591   & 384   & 23 \\
    Barcelona & 139   & 48    & 4 \\
    Emotion & 593   & 72    & 6 \\
    Yeast & 2,417 & 103   & 14 \\
    MSRC-SIFT & 591   & 240   & 23 \\
    Scene & 2,407 & 294   & 6 \\
\hline\hline
\end{tabular}}
\end{table}

\begin{table}[t]
\centering
\caption{Classification accuracy on Multi-label datasets ($K-1$ dimension).}
\label{tab:m_acc}
\resizebox{0.7\columnwidth}{!}{
\begin{tabular}{c|ccccc}
\hline\hline
Data  & kaLDA    & MLSI  & MDDM  & MLLS  & MLDA  \\
\hline
MSRC-MOM  & \textbf{0.9150}  & 0.8962 & 0.9044 & 0.8994 & 0.9036  \\
Barcelona & \textbf{0.6579}  & 0.6436 & 0.6470 & 0.6524 & 0.6290  \\
Emotion & \textbf{0.7634}  & 0.7397 & 0.7540 & 0.7529 & 0.7619 \\
Yeast & \textbf{0.7405} & 0.7317 & 0.7371 & 0.7364 & 0.7368 \\
MSRC-SIFT & 0.8839 & 0.8762 & 0.8800 & 0.8807 & \textbf{0.8858} \\
Scene & \textbf{0.8870}  & 0.8534 & 0.8713 & 0.8229 & 0.8771  \\
\hline\hline
\end{tabular}}
\end{table}

In this section, we first compare  kernel alignment LDA (kaLDA) with other six different methods on 8 single label data sets
and compare kaLDA multi-label version with four other methods on 6 multi-label data sets.

\subsection{Comparison with Trace Ratio w.r.t. subspace dimension}

\begin{figure*}[!]
\centering
\begin{subfigure}{.5\textwidth}
  \centering
  \includegraphics[width=.9\textwidth]{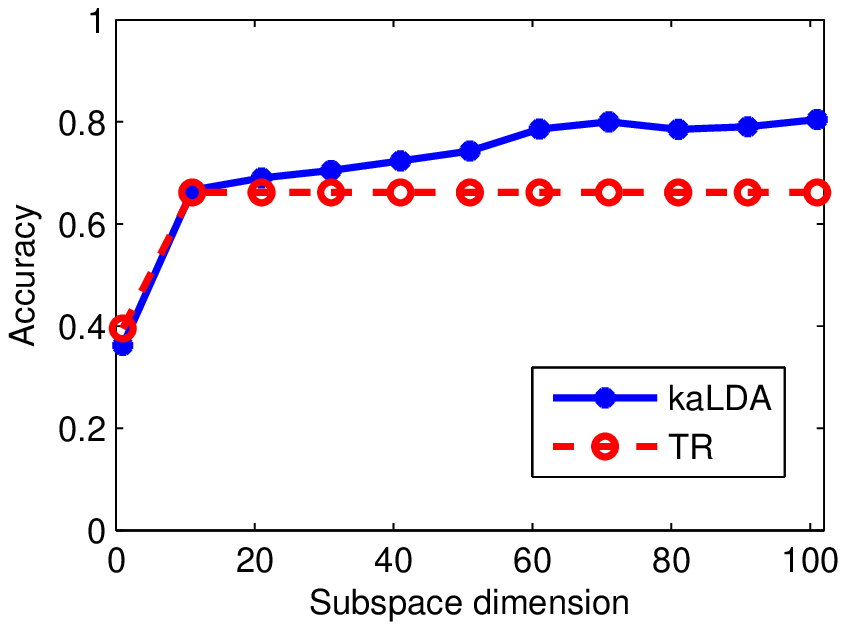}
  \caption{Caltec07}
  \label{fig:1}
\end{subfigure}%
\begin{subfigure}{.5\textwidth}
  \centering
  \includegraphics[width=.9\textwidth]{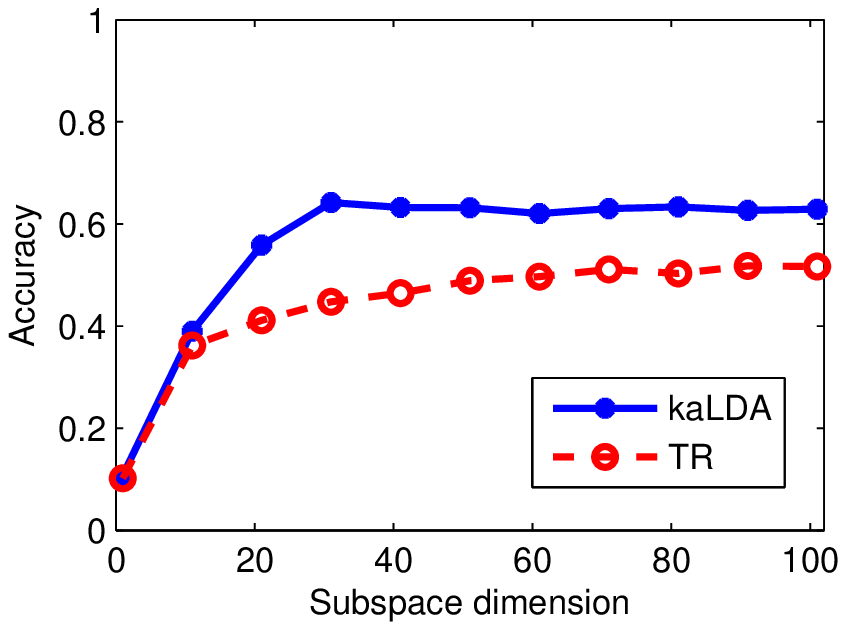}
  \caption{Caltec20}
  \label{fig:2}
\end{subfigure}\\
\begin{subfigure}{.5\textwidth}
  \centering
  \includegraphics[width=.9\textwidth]{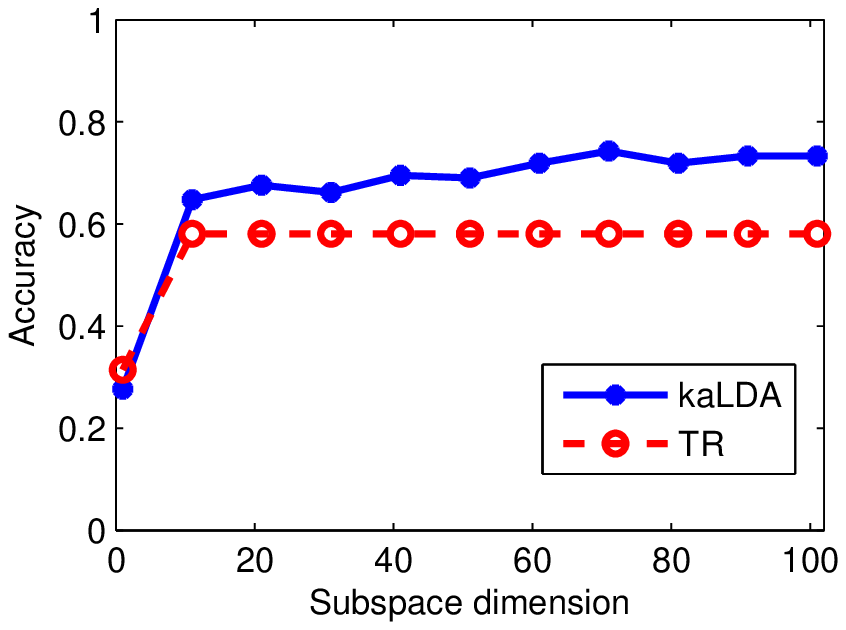}
  \caption{MSRC}
  \label{fig:3}
\end{subfigure}%
\begin{subfigure}{.5\textwidth}
  \centering
  \includegraphics[width=.9\textwidth]{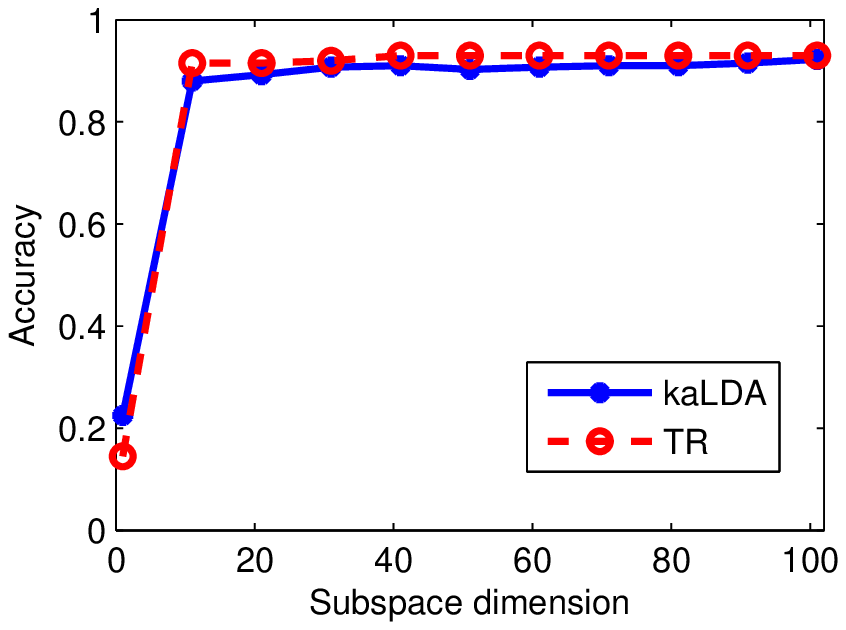}
  \caption{ATT}
  \label{fig:4}
\end{subfigure}\\
\begin{subfigure}{.5\textwidth}
  \centering
  \includegraphics[width=.9\textwidth]{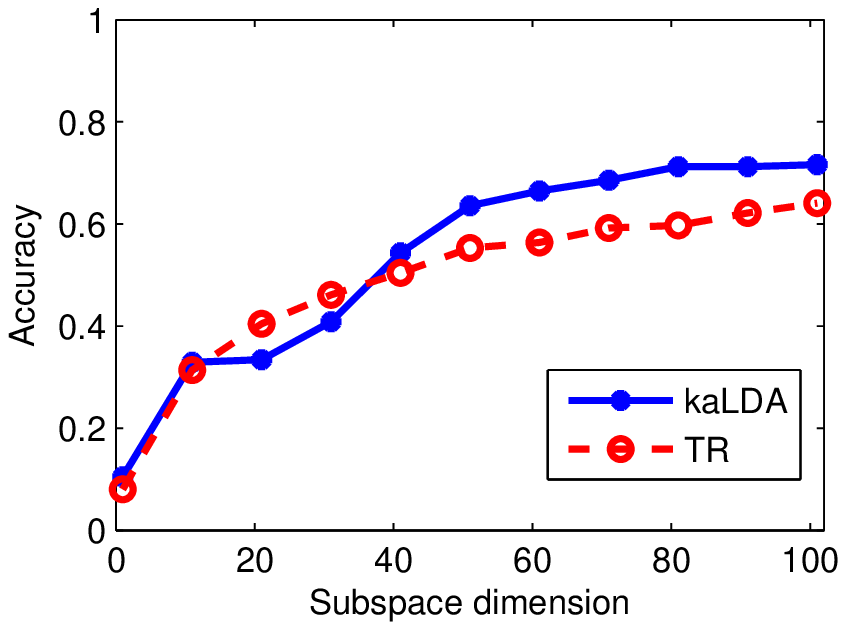}
  \caption{Binalpha}
  \label{fig:5}
\end{subfigure}%
\begin{subfigure}{.5\textwidth}
  \centering
  \includegraphics[width=.9\textwidth]{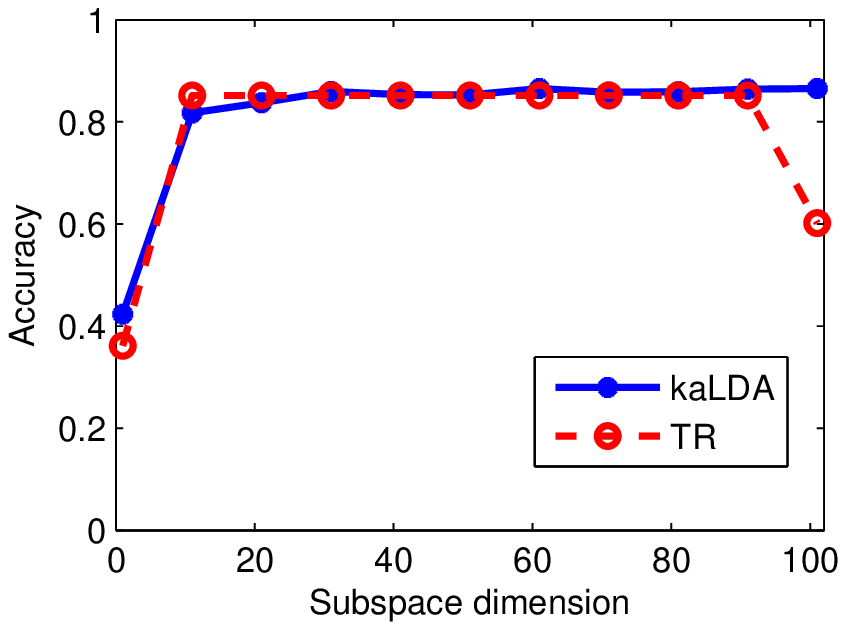}
  \caption{Mnist}
  \label{fig:6}
\end{subfigure}\\
\begin{subfigure}{.5\textwidth}
  \centering
  \includegraphics[width=.9\textwidth]{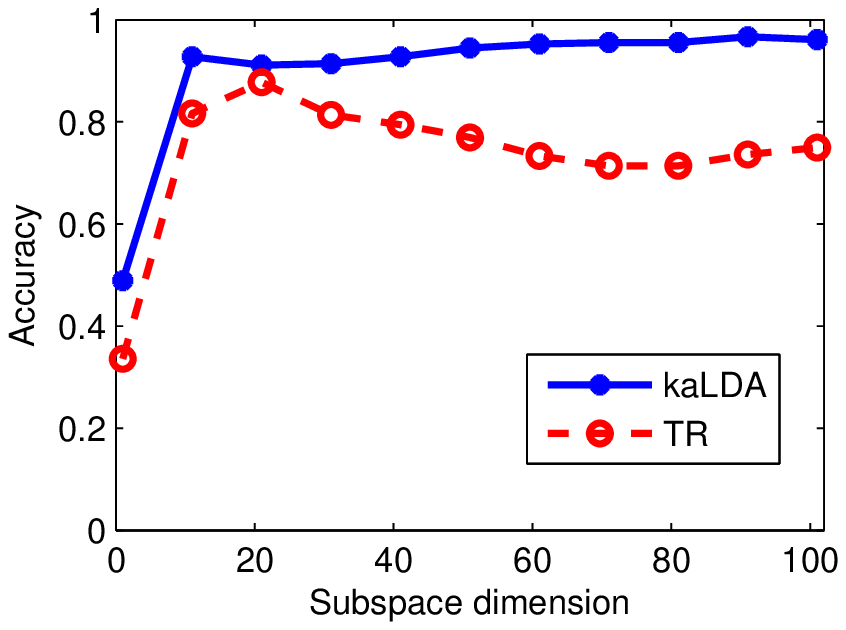}
  \caption{Umist}
  \label{fig:7}
\end{subfigure}%
\begin{subfigure}{.5\textwidth}
  \centering
  \includegraphics[width=.9\textwidth]{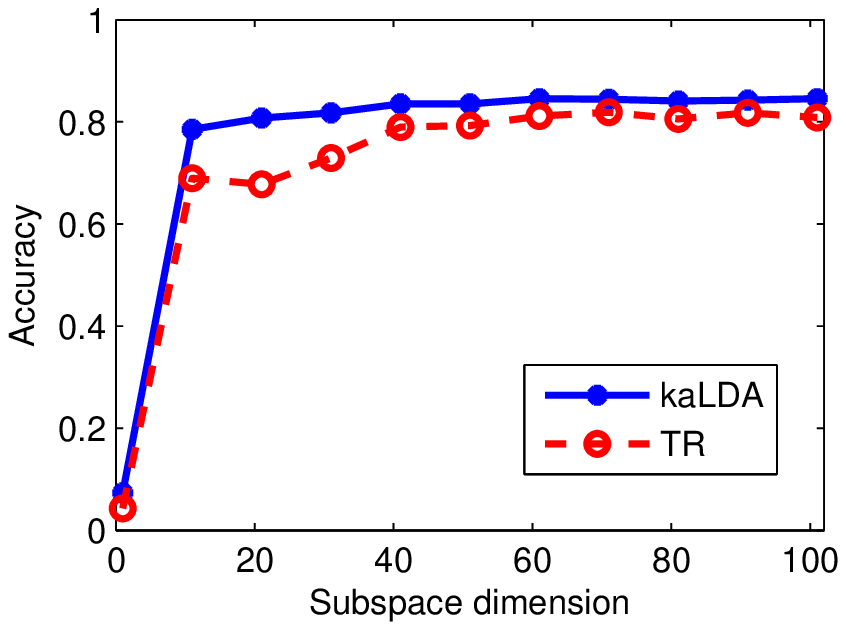}
  \caption{Pie}
  \label{fig:8}
\end{subfigure}
\caption{Classification accuracy w.r.t. dimension of the subspace.}
\label{fig:fn}
\end{figure*}

Eight single-label datasets are used in this experiment. These datasets come from different domains, such as image scene Caltec \cite{caltech} and MSRC \cite{msrc}, face datasets ATT, Umist, Pie \cite{pie}, and digit datasets Mnist \cite{mnist} and Binalpha. Table \ref{tab:data1} summarizes the attributes of those datasets.

{\bf Caltec07} and {\bf Caltec20} are subsets of Caltech 101 data. Only the HOG feature is used in this paper.

{\bf MSRC} is a image scene data, includes tree, building, plane, cow, face, car and so on. It has 210 images from 7 classes and each image has 432 dimension.

{\bf ATT} data contains 400 images of 40 persons, with 10 images for each person. The images has been resized to $28 \times 23$.

{\bf Binalpha} data contains 26 binary hand-written alphabets. It has 1014 images in total and each image has 320 dimension.

{\bf Mnist} is a handwritten digits dataset. The digits have been size-normalized and centred. It has 10 classes and 150 images in total, with 784 dimension each image.

{\bf Umist} is a face image dataset (Sheffield Face database) with 360 images from 20 individuals with mixed race, gender and appearance.

{\bf Pie} is a face database collected by Carnegie Mellon Robotics Institute between October and December 2000. In total, it has 68 different persons.

In this part, we compare the classification accuracy of kaLDA and Trace Ratio \cite{wang2007trace} with respect to subspace dimension.
The dimension of the subspace that kaLDA can find is not restricted to $K-1$.
After subspace projection, KNN classifier ($knn=3$) is applied to perform classification. Results are shown in Figure \ref{fig:fn}. Solid line denotes kaLDA accuracy and dashed line denotes Trace Ratio accuracy. As we can see, in Figures \ref{fig:1}, \ref{fig:2}, \ref{fig:3}, \ref{fig:7}, and \ref{fig:8}, kaLDA has higher accuracy than Trace Ratio when using the same number of reduced features. In Figures \ref{fig:4}, \ref{fig:5}, \ref{fig:6}, kaLDA has competitive classification accuracy with Trace Ratio. However, kaLDA is more stable than Trace Ratio. For example, in Figure \ref{fig:6} and \ref{fig:7}, we observe a decrease in accuracy when feature number increases using Trace Ratio.

\subsection{Comparison with other LDA methods}

\begin{table}[t]
\centering
\caption{Macro F1 score on Multi-label datasets ($K-1$ dimension).}
\label{tab:macrof}
\resizebox{0.7\columnwidth}{!}{
\begin{tabular}{c|ccccc}
\hline\hline
Dataset & kaLDA    & MLSI  & MDDM  & MLLS  & MLDA   \\
\hline
MSRC-MOM  & \textbf{0.6104} & 0.5244 & 0.5593 & 0.5426 & 0.5571 \\
Barcelona & \textbf{0.7377}  & 0.7286 & 0.7301 & 0.7341 & 0.7169  \\
Emotion & \textbf{0.6274} & 0.5873 & 0.6101 & 0.6041 & 0.6200  \\
Yeast & \textbf{0.5757}  & 0.5568 & 0.5696 & 0.5691 & 0.5693  \\
MSRC-SIFT & 0.4712  & 0.4334 & 0.4522 & 0.4544 & \textbf{0.4773} \\
Scene & \textbf{0.6851} & 0.5911 & 0.6411 & 0.5048 & 0.6568  \\
\hline\hline
\end{tabular}}
\end{table}

\begin{table}[t!]
\centering
\caption{Micro F1 score on Multi-label datasets ($K-1$ dimension).}
\label{tab:microf}
\resizebox{0.7\columnwidth}{!}{
\begin{tabular}{c|ccccc}
\hline\hline
Dataset & kaLDA     & MLSI  & MDDM  & MLLS  & MLDA   \\
\hline
MSRC-MOM  & \textbf{0.5138}  & 0.4064 & 0.4432 & 0.4370 & 0.4448  \\
Barcelona & \textbf{0.6969}  & 0.6891 & 0.6861 & 0.6904 & 0.6772  \\
Emotion & \textbf{0.6203}  & 0.5779 & 0.6030 & 0.5961 & 0.6151  \\
Yeast & \textbf{0.4249}  & 0.4026 & 0.4205 & 0.4216 & 0.4213 \\
MSRC-SIFT & 0.3943  & 0.3510 & 0.3637 & 0.3667 & \textbf{0.3959}  \\
Scene & \textbf{0.6966} & 0.6006 & 0.6493 & 0.5062 & 0.6643  \\
\hline\hline
\end{tabular}}
\end{table}

We compare kaLDA with six other different methods, including LDA, Trace Ratio (TR), spdLDA, Maximum Margin Criteria (MMC), regularized LDA (RLDA), and Orthogonal Centroid Method (OCM). All LDA will reduce data to $K-1$ dimension. KNN ($knn=3$) will be applied to do the classification after data is projected into the selected subspace. The other algorithms have already been introduced in related work section. The final classification accuracy is the average of 5-fold cross validation, and is reported in Table \ref{tab:acc}. The first column ``kaLDA'' reports kaLDA classification accuracy. kaLDA has the highest accuracy on 4 out of 8 datasets, including Caltec20, MSRC-MOM, ATT and Mnist. For Umist and Pie, kaLDA results are very close to the highest accuracy. Overall, kaLDA performs better than all other methods.

\subsection{Multi-label Classification}

Six multi-label datasets are used in this part. These datasets include images features, music emotion and so on. Table \ref{tab:data2} summarizes the attributes of those datasets.

{\bf MSRC-MOM} and {\bf MSRC-SIFT} data set is provided by Microsoft Research in Cambridge. It includes 591 images of 23 classes. {\bf MSRC-MOM} is the Moment invariants (MOM) feature of images and each image has 384 dimensions. {\bf MSRC-SIFT} is the SIFT feature and each image has 240 dimensions. About 80\% of the images are annotated with at least one classes and about three classes per image on average.

{\bf Barcelona} data set contains 139 images with 4 classes, i.e., ``building'', ``flora'', ``people'' and ``sky''. Each image has at least two labels.

{\bf Emotion} \cite{trohidis2008multi} is a music emotion data, which comprises 593 songs with 6 emotions. The dimension of Emotion is 72.

{\bf Yeast} \cite{elisseeff2001kernel} is a multi-label data set which contains functional classes of genes in the Yeast Saccharomyces cerevisiae.

{\bf Scene} \cite{boutell2004learning} contains images of still scenes with semantic indexing. It has 2407 images from 6 classes.

We use 5-fold cross validation to evaluate classification performance of different algorithms. K-Nearest Neighbour (KNN) classifier is used after the subspace projection. The algorithms we compared in this section includes Multi-label informed Latent Semantic Indexing (MLSI), Multi-label Dimensionality reduction via Dependence Maximization (MDDM), Multi-Label Least Square (MLLS), Multi-label Linear Discriminant Analysis (MLDA). These algorithms have been introduced in related work section.

We compare the performance of kaLDA and other algorithms using macro accuracy (Table \ref{tab:m_acc}), macro-averaged F1-score (Table \ref{tab:macrof}) and micro-averaged (Table \ref{tab:microf}) F1-score. Accuracy and F1 score are computed using standard binary classification definitions. In multi-label classification, macro average is a standard class-wise average, and it is related to number of samples in each class. However, micro average gives equal weight to all classes \cite{wang2010multi}. kaLDA achieves highest classification accuracy on 5 out of 6 datasets.
On the remaining  MSRC-SIFT dataset, kaLDA result is very close to the best method MLDA and beat all rest methods. kaLDA achieves highest macro and micro F1 score on 5 out of 6 datasets. Furthermore, kaLDA has the second highest macro and micro F1 score on dataset MSRC-SIFT. Overall, kaLDA outperforms other multi-label algorithms in terms of classification accuracy and macro and micro F1 score.

\section{Conclusions}
In this paper, we propose a new kernel alignment induced LDA  (kaLDA). The objective function of kaLDA is very similar to classical LDA objective. The Stifel-manifold  gradient descent algorithm can solve kaLDA objective efficiently. We have also extended kaLDA to multi-label problems. Extensive experiments show the effectiveness of kaLDA in both single-label and multi-label problems.

\noindent {\bf Acknowledgment}.
This work is partially supported  by US NSF CCF-0917274 and NSF DMS-0915228.

\bibliographystyle{splncs03}
\bibliography{bib1}

\end{document}